\newtheorem{proposition}{Proposition}
\newtheorem*{proposition*}{Proposition}
\newcommand{\mcD}{\mathcal{D}}
\newcommand{\mcU}{\mathcal{U}}
\newcommand{\E}{\mathbb{E}}
\newcommand{\reals}{\mathbb{R}}
\DeclareMathOperator*{\supp}{supp}
\DeclareMathOperator*{\KL}{KL}
\DeclareMathOperator*{\RELU}{ReLU}
\DeclareMathOperator*{\softmax}{softmax}
\DeclareMathOperator*{\enc}{enc}
\newcommand{\given}{\,|\,}
\title{Latent Alignment and Variational Attention }
\author{
Yuntian Deng\thanks{Equal contribution.} \And
Yoon Kim$^{*}$ \And
Justin Chiu \And
Demi Guo \And
Alexander M. Rush \And \vspace{-4mm} 
\\
$\texttt{\small \{dengyuntian@seas,yoonkim@seas,justinchiu@g,dguo@college,srush@seas\}.harvard.edu} $ \vspace{4mm} \\
School of Engineering and Applied Sciences \\
Harvard University \\
Cambridge, MA, USA
}
\begin{document}

\maketitle

\begin{abstract}
  Neural attention has become central to many state-of-the-art models
  in natural language processing and related domains. Attention
  networks are an easy-to-train and effective method for softly simulating
  alignment; however, the approach does
  not marginalize over latent alignments in a probabilistic sense. This property
  makes it difficult to compare attention to other alignment
  approaches, to compose it with probabilistic models, and to perform
  posterior inference conditioned on observed data. A related latent
  approach, hard attention, fixes these issues, but is generally
  harder to train and less accurate. This work considers
  \textit{variational attention} networks, alternatives to soft and
  hard attention for learning latent variable alignment models, with
  tighter approximation bounds based on amortized variational
  inference. We further propose methods for reducing the variance of
  gradients to make these approaches computationally
  feasible. Experiments show that for machine translation and visual
  question answering, inefficient exact latent variable models
  outperform standard neural attention, but these gains go away
  when using hard attention based training. On the other hand, variational
  attention retains most of the performance gain but with training speed
  comparable to neural attention.

\end{abstract}

\section{Introduction}

Attention networks \cite{Bahdanau2015} have quickly become the
foundation for state-of-the-art models in natural language
understanding, question answering, speech recognition, image
captioning, and more
\cite{Cho2015,Yang2015,Chorowski2015,Chan2015,Rush2015,Xu2015,Sukhbaatar2015,Rock2016}.
Alongside components such as residual blocks and long-short term
memory networks, soft attention provides a rich neural network
building block for controlling gradient flow and encoding inductive
biases.  However, more so than these other components, which are often
treated as black-boxes, researchers use intermediate attention
decisions directly as a tool for model interpretability
\cite{lei2016rationalizing,alvarez2017causal} or as a factor in final
predictions \cite{gu2016incorporating,shin2017classification}. From
this perspective, attention plays the role of a latent alignment
variable \cite{brown1993mathematics,koehn2007moses}. An alternative
approach, hard attention \cite{Xu2015}, makes this
connection explicit by introducing a latent variable for alignment and
then optimizing a bound on the log marginal likelihood using policy
gradients. This approach generally performs worse (aside from a few exceptions such as \cite{Xu2015}) and is
used less frequently than its soft counterpart.

Still the latent alignment approach remains appealing for several
reasons: (a) latent variables facilitate reasoning about dependencies
in a probabilistically principled way, e.g. allowing composition with
other models, (b) posterior inference provides a better basis for
model analysis and partial predictions than strictly feed-forward
models, which have been shown to underperform on
alignment in machine translation \cite{koehn2017six}, and finally (c)
directly maximizing marginal likelihood may lead to better results.

The aim of this work is to quantify the issues with attention and
propose alternatives based on recent developments in variational
inference. While the connection between variational inference and hard
attention has been noted in the literature \cite{Ba2015,Lawson2017},
the space of possible bounds and optimization methods has not been
fully explored and is growing quickly. These tools allow us to better
quantify whether the general underperformance of hard attention
models is due to modeling issues (i.e. soft attention imbues a better
inductive bias) or optimization issues. 

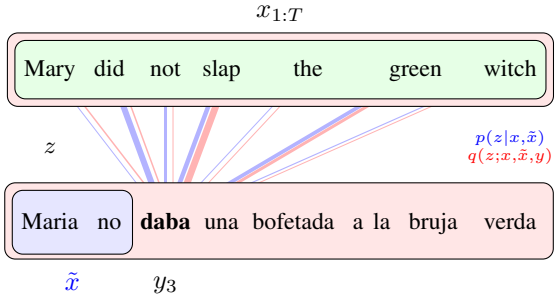
\begin{wrapfigure}{r}{0.5\textwidth}
  \centering
  
  \begin{tikzpicture}[every node/.style={anchor=base,minimum size=8mm}]
    \matrix  (graph) [matrix of nodes, row sep=0.5em,column sep=-0.3em,
    minimum width=0.2em, minimum height=0.5em, font=\small] {
      Mary & 
      did &
      not &
      slap &
      the &
      green &
      witch \\ 
      $z$ & & & & & & $\substack{\textcolor{blue}{p(z| x, \tilde{x})} \\ \textcolor{red}{q(z; x, \tilde{x}, y)} }$ \\
      Maria & 
      no & 
      \textbf{daba} & 
      una & 
      bofetada \; a& 
       la \; bruja & 
      verda\\
    };

    \begin{scope}[on background layer]
      \draw[blue!30, line width=0.1mm] (graph-1-1) -- (graph-3-3);
      \draw[blue!30,line width=0.8mm] (graph-1-2) -- (graph-3-3);
      \draw[blue!30,line width=0.4mm] (graph-1-3) -- (graph-3-3);
      \draw[blue!30,line width=0.6mm] (graph-1-4) -- (graph-3-3);
      \draw[blue!30,line width=0.04mm] (graph-1-5) -- (graph-3-3);
      \draw[blue!30,line width=0.5mm] (graph-1-6) -- (graph-3-3);
      \draw[blue!30,line width=0.15mm] (graph-1-7) -- (graph-3-3);
      
      \draw[draw=red!30, line width=0.2mm] ($(graph-1-1) + (0.1, 0)$) -- ($(graph-3-3) + (0.1, 0)$);
      \draw[draw=red!30, line width=0.2mm] ($(graph-1-2) + (0.1, 0)$) -- ($(graph-3-3) + (0.1, 0)$);
      \draw[draw=red!30, line width=0.1mm] ($(graph-1-3) + (0.1, 0)$) -- ($(graph-3-3) + (0.1, 0)$);
      \draw[draw=red!30, line width=0.9mm] ($(graph-1-4) + (0.1, 0)$) -- ($(graph-3-3) + (0.1, 0)$);
      \draw[draw=red!30, line width=0.1mm] ($(graph-1-5) + (0.1, 0)$) -- ($(graph-3-3) + (0.1, 0)$);
      \draw[draw=red!30, line width=0.4mm] ($(graph-1-6) + (0.1, 0)$) -- ($(graph-3-3) + (0.1, 0)$);
      \draw[draw=red!30, line width=0.1mm] ($(graph-1-7) + (0.1, 0)$) -- ($(graph-3-3) + (0.1, 0)$);

      \draw[rounded corners,fill=red!10] ($ (graph-3-1.north west) +(-0.1,0.1)$) rectangle  node[yshift=-0.8cm]{\textcolor{red}{}} ($(graph-3-7.south east) +(0.1,-0.1)$ ) ;
      
      \draw[rounded corners,fill=red!10] ($ (graph-1-1.north west) +(-0.1,0.1)$) rectangle  node[yshift=0.8cm]{\textcolor{red}{}} ($(graph-1-7.south east) +(0.1,-0.1)$ ) ;
      
      \draw[rounded corners,fill=green!10] (graph-1-1.north west) rectangle  node[ yshift =0.7cm] {$x_{1:T}$} (graph-1-7.south east);
      
      \path[] (graph-3-3.north west) rectangle  node[yshift=-0.9cm]{\textbf{$y_3$}} (graph-3-3.south east);
      
      \draw[rounded corners, fill=blue!10] (graph-3-1.north west) rectangle  node[yshift=-0.9cm]{\textcolor{blue}{$\tilde{x}$}}  ($(graph-3-2.south east) + (-0.1, 0)$);
      
    \end{scope}
  \end{tikzpicture}  
  \caption{\label{fig:sketch} Sketch of variational attention applied to machine translation. 
    Two alignment distributions are shown, the blue prior $p$, 
    and the red variational posterior $q$ taking into account future observations. 
    Our aim is to use $q$ to improve estimates of $p$ and to support improved inference of 
    $z$. 
  }
\end{wrapfigure}

Our main contribution is a \emph{variational attention} approach that
can effectively fit latent alignments while remaining
tractable to train. We consider two variants of variational attention:
\emph{categorical} and \emph{relaxed}.  The categorical method is fit with
amortized variational inference using a learned inference
network and policy gradient with a soft attention variance
reduction baseline. With an appropriate
inference network (which conditions on the entire source/target), it can be used at training time as a drop-in
replacement for hard attention. The relaxed version assumes that the alignment is sampled from a Dirichlet distribution and hence allows attention over multiple source elements. 

Experiments describe how to implement this approach for two major
attention-based models: neural machine translation and visual question
answering (Figure~\ref{fig:sketch} gives an overview of our approach for machine translation). We first show that maximizing exact marginal likelihood can
increase performance over soft attention. We further show that with
variational (categorical) attention, alignment variables significantly
surpass both soft and hard attention results without requiring much
more difficult training. We further explore the impact of posterior
inference on alignment decisions, and how latent variable models might
be employed. Our code is available at \url{https://github.com/harvardnlp/var-attn/}.

\textbf{Related Work} Latent alignment has long been a core problem in
NLP, starting with the seminal IBM models  \cite{brown1992}, HMM-based alignment models
\cite{vogel1996}, and a fast log-linear reparameterization of the IBM 2 model \cite{dyer13}. 
Neural soft attention models
were originally introduced as an alternative approach for neural machine translation \cite{Bahdanau2015}, and have subsequently
been successful on a wide range of tasks (see \cite{Cho2015} for a review of applications). Recent work has combined neural attention with traditional alignment \cite{Cohn2016,Tu2016}
and induced structure/sparsity \cite{Martins2016,Kim2017,Liu2017,Zhu2017,Niculae2017,Niculae2018,Mensch2018}, which can be
combined with the variational approaches outlined in this paper.

In contrast to soft attention models, hard attention
\cite{Xu2015,Ba2015b} approaches use a single sample at training time
instead of a distribution.  These models have proven much more
difficult to train, and existing works typically treat hard attention
as a black-box reinforcement learning problem with log-likelihood as
the reward \cite{Xu2015,Ba2015b,Mnih2015,Gulcehre2016,deng2017image}.
Two notable exceptions are \cite{Ba2015,Lawson2017}: both utilize
amortized variational inference to learn a sampling distribution which
is used obtain importance-sampled estimates of the
log marginal likelihood \cite{Burda2015}. Our method uses uses different
estimators and targets the single sample approach for efficiency,
allowing the method to be employed for NMT and VQA applications.

There has also been significant work in using variational autoencoders
for language and translation application.  Of particular interest are
those that augment an RNN with latent variables (typically Gaussian)
at each time step
\cite{Chung2015,Fraccaro2016,Serban2017,Goyal2017b,Krishnan2017b} and
those that incorporate latent variables into sequence-to-sequence
models \cite{zhang2016variational,Bahuleyan2017,su2018variational,Schulz2018}.
Our work differs by modeling an explicit model component (alignment)
as a latent variable instead of auxiliary latent variables
(e.g. topics). The term
"variational attention" has been used to refer to a different component the output
from attention (commonly called the context vector) as a latent
variable \cite{Bahuleyan2017}, or to model both the memory
and the alignment as a latent variable \cite{Bornschein2018}. Finally, there is some parallel work \cite{wu2018,shankar2018} which also performs exact/approximate marginalization over latent alignments for sequence-to-sequence learning.

\section{Background: Latent Alignment and Neural Attention}

We begin by introducing notation for latent alignment, and then show
how it relates to neural attention. For clarity, we are careful to use
\textit{alignment} to refer to this probabilistic model (Section 2.1), and
\textit{soft} and \textit{hard} attention to refer to two particular 
inference approaches used in the literature to estimate alignment models (Section 2.2).  
\subsection{Latent Alignment}

Figure~\ref{fig:gm}(a) shows a latent alignment model.
Let $x$ be an observed set with associated members
$\{x_1, \ldots, x_i, \ldots, x_T\}$. Assume these are
vector-valued (i.e. $x_i \in \reals^d$) and can be stacked to form a
matrix $X \in \reals^{d \times T}$.  Let the observed
$\tilde{x}$ be an arbitrary ``query''. These generate
a discrete output variable $y \in \mathcal{Y}$. This process is
mediated through a latent alignment variable $z$, which indicates
which member (or mixture of members) of $x$ generates $y$. The
generative process we consider is:
\begin{eqnarray*}
 z \sim {\mcD}(a(x, \tilde{x};\theta )) \ \ \ \  y \sim f(x, z; \theta)
\end{eqnarray*}
where $a$ produces the parameters for an alignment distribution
$\mcD$. The function $f$ gives a distribution over the output, e.g. an
exponential family. To fit this model to data, we set the model
parameters $\theta$ by maximizing the log marginal likelihood of
training examples $(x, \tilde{x}, \hat{y})$:\footnote{When clear
  from context, the random variable is dropped from $\E[\cdot]$. We also interchangeably use $p(\hat{y} \given x, \tilde{x})$ and $f(x, z;\theta)_{\hat{y}}$ to denote $p(y = \hat{y} \given x, \tilde{x})$.}
\begin{eqnarray*}
\max_{\theta}\ \log p(y = \hat{y} \given  x, \ \tilde{x}) &=& \max_{\theta} \ \log \E_z [f(x, z;\theta)_{\hat{y}}]
\end{eqnarray*}

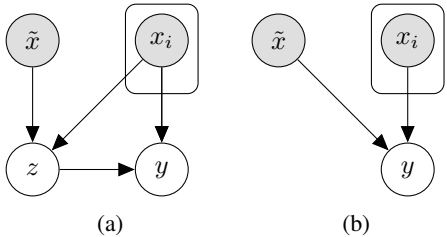
\begin{wrapfigure}{r}{0.5\textwidth}
  \centering
  \begin{subfigure}[]{0.2\textwidth}
  \begin{tikzpicture}
  \node(z)[latent]{$z$};
  \node(y)[right =of z, latent]{$y$};
  \node(x)[obs, above = of y]{$x_i$};
  \node(xp)[above= of z, obs]{$\tilde{x}$};
  \plate {} {(x)} {};
  \edge {x} {y};
  \edge {xp} {z};
  \edge {x} {z};
  \edge {x} {y};
  \edge {z} {y};
  \end{tikzpicture}
  \caption{}
\end{subfigure}
\hspace{0.3cm}
  \begin{subfigure}[]{0.2\textwidth}
  \begin{tikzpicture}
  \node(y)[right =of z, latent]{$y$};
  \node(x)[obs, above = of y]{$x_i$};
  \node(xp)[above= of z, obs]{$\tilde{x}$};
  \plate {} {(x)} {};
  \edge {xp} {y};
  \edge {x} {y};
  \end{tikzpicture}
  \caption{}
\end{subfigure}
  \caption{\label{fig:gm} Models over observed set $x$, query $\tilde{x}$, and alignment $z$. (a) Latent alignment model, (b) Soft attention  with $z$ absorbed into prediction network. }
\end{wrapfigure}

Directly maximizing this log marginal likelihood in the presence of
the latent variable $z$ is often difficult due to the
expectation (though tractable in certain cases).

For this to represent an alignment, we restrict the variable $z$ to be
in the simplex $\Delta^{T-1}$ over source indices $\{1, \dots, T\}$. We consider two
distributions for this variable: first, let $\mcD$ be a
\textit{categorical} where $z$ is a one-hot vector with $z_i = 1$ if $x_i$ is selected.  For example, $f(x,z)$ could use $z$ to pick from
$x$ and apply a softmax layer to predict $y$, i.e.
$f(x, z) = \softmax(\mathbf{W} X z)$ and
$\mathbf{W} \in \reals^{|\mathcal{Y}| \times d}$,
\[\log p(y = \hat{y} \given  x, \ \tilde{x}) = \log \sum_{i=1}^T p(z_i=1 \given x, \tilde{x}) p(y = \hat{y} \given  x, z_i=1) = \log \E_z [\softmax(\mathbf{W} X z)_{\hat{y}}]\]
This computation requires a factor of $O(T)$ additional runtime, and introduces a major computational factor into
already expensive deep learning models.\footnote{Although not our main focus, explicit marginalization is
  sometimes tractable with efficient matrix operations on modern
  hardware, and we compare the variational approach to explicit
  enumeration in the experiments. In some cases it is also possible to
  efficiently perform exact marginalization with dynamic programming
  if one imposes additional constraints (e.g. monotonicity) on the
  alignment distribution \cite{Yu2016,Yu2017,Raffel2017}.}

Second we consider a \textit{relaxed} alignment where $z$ is a mixture
taken from the interior of the simplex by letting $\mcD$ be a Dirichlet. This objective
looks similar to the categorical case, i.e. $\log p(y = \hat{y} \given x, \ \tilde{x}) = \log \E_z
[\softmax(\mathbf{W} X z)_{\hat{y}}]$, but the resulting expectation is intractable to compute
exactly.

\subsection{Attention Models: Soft and Hard}

When training deep learning models with gradient methods, it can be
difficult to use latent alignment directly.  As such, two alignment-like
approaches are popular: \textit{soft attention} replaces the
probabilistic model with a deterministic soft function and
\textit{hard attention} trains a latent alignment model by maximizing a lower bound on the log marginal likelihood (obtained from Jensen's inequality) with policy gradient-style training. We briefly describe how these
methods fit into this notation.  

\paragraph{Soft Attention} Soft attention networks use an altered model shown in Figure~\ref{fig:gm}b. Instead of using a latent
variable, they employ a deterministic network to compute an
expectation over the alignment variable. We can write this model using the same functions $f$ and $a$ from above,
\[ \log p_{\text{soft}}(y\given x, \ \tilde{x}) = \log
  f(x,\E_z[z];\theta) = \log \softmax(\mathbf{W}X\E_z[z])\]

A major benefit of soft attention is efficiency.  Instead of
paying a multiplicative penalty of $O(T)$ or requiring integration,
the soft attention model can compute the expectation before $f$.
While formally a different model, soft attention has been described as
an approximation of alignment \cite{Xu2015}. Since $\E[z] \in \Delta^{T-1}$, soft attention
uses a convex combination of the input representations $X \E[z]$
(the \emph{context vector}) to obtain a distribution over
the output.  While also a ``relaxed'' decision, this expression differs from both the latent alignment
models above. Depending on $f$, the gap between
$\E[f(x, z)]$ and $f(x, \E[z])$ may be large.

However there are some important special cases. In the case where
$p(z \given x, \tilde{x})$ is deterministic, we have
$\E[f(x, z)] = f(x, \E[z])$, and
$p(y \given x, \ \tilde{x})= p_{\text{soft}}(y \given x, \
\tilde{x})$.
In general we can bound the absolute difference based on the maximum
curvature of $f$, as shown by the following proposition.
\begin{proposition}
Define $g_{x, \hat{y}} : \Delta^{T-1} \mapsto [0,1]$ to be the function given by $g_{x, \hat{y}}(z) = f(x, z)_{\hat{y}}$ (i.e. $g_{x, \hat{y}}(z)=p(y=\hat{y} \given x, \tilde{x}, z))$ for a twice differentiable function $f$. Let $H_{g_{x,\hat{y}}}(z)$ be the Hessian of $g_{x,\hat{y}}(z)$ evaluated at $z$, and further suppose $\Vert H_{g_{x,\hat{y}}}(z) \Vert_2 \le c$ for all $z \in \Delta^{T-1}, \hat{y} \in \mathcal{Y}$, and $x$, where $\Vert \cdot \Vert_2$ is the spectral norm. Then for all $\hat{y} \in \mathcal{Y}$,
\[ \, | \, p(y = \hat{y} \given x, \tilde{x}) -  p_\textup{soft}(y = \hat{y} \given x, \tilde{x}) \,| \le c \]
\end{proposition}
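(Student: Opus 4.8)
The plan is to recognize the left-hand side as the gap in Jensen's inequality and to control it via a second-order Taylor expansion about the mean. Abbreviate $g := g_{x,\hat y}$ and $\bar z := \E_z[z]$. By the definitions given above, $p(y = \hat y \given x, \tilde x) = \E_z[g(z)]$ and $p_{\text{soft}}(y = \hat y \given x, \tilde x) = g(\bar z)$, so the claim is exactly a bound on $|\E_z[g(z)] - g(\bar z)|$.

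First I would apply the multivariate Taylor theorem with Lagrange remainder to $g$: since $g$ is twice differentiable, for each $z \in \Delta^{T-1}$ there exists a point $\xi_z$ on the segment joining $\bar z$ and $z$ such that
\[ g(z) = g(\bar z) + \nabla g(\bar z)^\top (z - \bar z) + \tfrac12 (z - \bar z)^\top H_{g}(\xi_z)(z - \bar z). \]
Taking $\E_z$ of both sides, the linear term vanishes because $\E_z[z - \bar z] = 0$ by the definition of $\bar z$, so
\[ \E_z[g(z)] - g(\bar z) = \tfrac12\, \E_z\!\left[(z - \bar z)^\top H_{g}(\xi_z)(z - \bar z)\right]. \]

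Next I would bound the remainder uniformly. The step that needs care is that the hypothesis bounds $\|H_{g}(\cdot)\|_2$ only on $\Delta^{T-1}$, so I must verify $\xi_z \in \Delta^{T-1}$; this holds precisely because the simplex is convex and $\xi_z$ lies on a segment between two of its points. Using $\|H_{g}(\xi_z)\|_2 \le c$ and the definition of the spectral norm gives $|(z - \bar z)^\top H_{g}(\xi_z)(z - \bar z)| \le c\,\|z - \bar z\|_2^2$, and hence
\[ |\E_z[g(z)] - g(\bar z)| \le \tfrac{c}{2}\, \E_z\|z - \bar z\|_2^2. \]

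Finally I would bound the expected squared deviation on the simplex. Since $\E_z\|z - \bar z\|_2^2 = \E_z\|z\|_2^2 - \|\bar z\|_2^2 \le \E_z\|z\|_2^2$, and every $z \in \Delta^{T-1}$ satisfies $\|z\|_2^2 = \sum_i z_i^2 \le \sum_i z_i = 1$, we obtain $\E_z\|z - \bar z\|_2^2 \le 1$ and therefore $|\E_z[g(z)] - g(\bar z)| \le c/2 \le c$. This in fact yields a slightly stronger constant than claimed; I expect the only genuinely delicate point to be the Lagrange-remainder/convexity bookkeeping that licenses applying the uniform Hessian bound at the intermediate points $\xi_z$, together with the implicit assumption that $g$ is twice differentiable along the simplex so that the expansion is valid.
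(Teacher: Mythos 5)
Your proof is correct and follows essentially the same route as the paper's: a second-order Taylor expansion of $g_{x,\hat{y}}$ about $\E[z]$ with the linear term vanishing, followed by a spectral-norm bound on the Lagrange remainder evaluated at an intermediate point of the (convex) simplex. The only difference is the last step, where your variance identity $\E\Vert z - \E[z]\Vert_2^2 = \E\Vert z\Vert_2^2 - \Vert\E[z]\Vert_2^2 \le 1$ is sharper than the paper's pointwise bound $\Vert z - \E[z]\Vert_2^2 \le 2$, so you obtain the slightly stronger constant $c/2$.
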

The proof is given in Appendix A.\footnote{It is also
  possible to study the gap in finer detail by considering
  distributions over the inputs of $f$ that have high probability
  under approximately linear regions of $f$, leading to the notion of
  \emph{approximately expectation-linear} functions, which was
  originally proposed and studied in the context of dropout
  \cite{Ma2017}.} Empirically the soft approximation works
remarkably well, and often moves towards a sharper
distribution with training. Alignment distributions learned this way
often correlate with human intuition (e.g. word alignment in machine
translation)
\cite{koehn2017six}.\footnote{Another way of viewing soft attention
is as simply a non-probabilistic learned function. While it is
possible that such models encode better inductive biases, our
experiments show that when properly optimized, latent alignment
attention with explicit latent variables do outperform soft
attention.}

\paragraph{Hard Attention} Hard attention is an
approximate inference approach for latent alignment (Figure~\ref{fig:gm}a)
\cite{Xu2015,Ba2015,Mnih2015,Gulcehre2016}. Hard attention takes a
single hard sample of $z$ (as opposed to a soft mixture) and then
backpropagates through the model. The approach is derived by two
choices: First apply Jensen's inequality to get a lower bound on the
log marginal likelihood,
$ \log \E_{z}[p(y \given x, z)] \ge \E_{z}[\log p(y \given x, z)] $, then maximize this lower-bound with policy gradients/REINFORCE  \cite{Williams1992} to obtain
unbiased gradient estimates,
\[\nabla_\theta \E_{z} [\log f(x,z))] = \E_{z} [\nabla_\theta\log f(x,z) + (\log f(x,z) - B) \nabla_\theta \log p(z \given x, \tilde{x}) ], \]
where $B$ is a baseline that can be used to
reduce the variance of this estimator.
To implement this approach efficiently, hard attention uses Monte
Carlo sampling to estimate the expectation in the gradient
computation. For efficiency, a single sample from
$p(z \given x, \tilde{x})$ is used, in conjunction with other tricks to reduce
the variance of the gradient estimator (discussed more below)
\cite{Xu2015,Mnih2014,Mnih2016}.  
\section{Variational Attention for Latent Alignment Models}

Amortized variational inference (AVI, closely related to variational
auto-encoders) \cite{Kingma2014,Rezende2014,Mnih2014} is a class
of methods to efficiently approximate latent variable inference, using
learned inference networks.  In this section we explore this technique
for deep latent alignment models, and propose methods for
\textit{variational attention} that combine the benefits of soft and
hard attention.

First note that the key approximation step in hard attention is to
optimize a lower bound derived from Jensen's inequality. This gap could
be quite large, contributing to poor performance. \footnote{Prior works
on hard attention have generally approached the problem as a
black-box reinforcement learning problem where the rewards are given
by $\log f(x, z)$. Ba et al. (2015) \cite{Ba2015} and Lawson et
al. (2017) \cite{Lawson2017} are the notable exceptions, and both
works utilize the framework from \cite{Mnih2016} which obtains multiple samples from a learned sampling distribution to optimize the IWAE bound
\cite{Burda2015} or a reweighted wake-sleep objective.}  
Variational inference methods directly aim to tighten this gap. In particular, the
\emph{evidence lower bound} (ELBO) is a parameterized bound over a
family of distributions $q(z) \in \mathcal{Q}$ (with the constraint
that the $\supp q(z) \subseteq \supp p(z \given x, \tilde{x}, y)$),
\[ \log \E_{z \sim p(z \given x, \tilde{x})}[p(y \given x, z)] \ge
  \E_{z \sim q(z)}[\log p(y \given x, z)] - \KL[q(z) \, \Vert \, p(z
  \given x, \tilde{x})] \]
This allows us to search over variational distributions $q$ to improve
the bound. It is tight when the variational distribution is equal to
the posterior, i.e.  $q(z) = p(z \given x, \tilde{x}, y)$. Hard
attention is a special case of the ELBO with 
$q(z) = p(z \given x, \tilde{x})$.

There are many ways to optimize the evidence lower bound; an effective
choice for deep learning applications is to use \textit{amortized
  variational inference}. AVI uses an \textit{inference network} to produce the
parameters of the variational distribution $q(z; \lambda)$. The
inference network takes in the input, query, and the output, i.e.
$\lambda = enc(x, \tilde{x}, y ; \phi)$.  The objective aims to
reduce the gap with the inference network $\phi$ while also training the generative model $\theta$,
\[ \max_{\phi, \theta} \E_{z \sim q(z;\lambda)}[\log p(y \given x, z)] - \KL[q(z;\lambda) \, \Vert \, p(z \given x, \tilde{x})]\]
With the right choice of optimization strategy and inference network this form of variational attention can provide a general method for learning latent alignment models. In the rest of this section, we consider strategies for accurately and efficiently computing this objective; in the next section, we describe instantiations of $enc$ for specific domains.

\paragraph{Algorithm 1: Categorical Alignments}
\begin{figure}
  \centering
\begin{minipage}{.48\linewidth}
\begin{algorithm}[H]
  \begin{algorithmic}
    \State{$\lambda \gets \enc(x, \tilde{x}, y ; \phi)$  }
    \Comment{\textit{Compute var. params}}
    \State{$z \sim q(z; \lambda)$ }
    \Comment{\textit{Sample var. attention}}
    \State{$\log f(x,z)$}
    \Comment{Compute output dist\;}
    \State{$z' \gets \E_{ p(z' \given x, \tilde{x})}[z']$ }
    \Comment{Compute soft atten.\;\; }
    \State{$B = \log f(x, z')$ }
    \Comment{Compute baseline dist}
    \State{Backprop $\nabla_{\theta}$ and $\nabla_{\phi}$ based on eq.~\ref{eq:policy} and KL}
  \end{algorithmic}
  \caption{Variational  Attention}
\end{algorithm}
\end{minipage} \hspace{0.2cm}\begin{minipage}{.48\linewidth}
\begin{algorithm}[H]
  \begin{algorithmic}
    \State{$\max_\theta \E_{z\sim p}[\log p(y \given x, z)]$ }
    \Comment{\textit{Pretrain fixed $\theta$}}
    \State{$\ldots$}
    \State{$u \sim \mcU$ }
    \Comment{\textit{Sample unparam. \; \; }}
    \State{$z \gets g_\phi(u)$  }
    \Comment{\textit{Reparam sample \; \; \; }}
    \State{$\log f(x,z)$}
    \Comment{Compute output dist\;}
    \State{Backprop $\nabla_{\theta}$ and $\nabla_{\phi}$, reparam and KL}
  \end{algorithmic}
  \caption{Variational Relaxed Attention}
\end{algorithm}
\end{minipage}
\end{figure}
First consider the case where $\mcD$, the alignment distribution,
and $\cal Q$, the variational family, are categorical distributions. Here the generative
assumption is that $y$ is generated from a single index of $x$. Under
this setup, a low-variance estimator of $\nabla_\theta \text{ELBO}$, is easily obtained through a single sample from
$q(z)$. For $\nabla_\phi \text{ELBO}$, the
gradient with respect to the KL portion is easily computable, but there is
an optimization issue with the gradient with respect to the first term
$\E_{z\sim q(z)} [\log f(x,z))]$.

Many recent methods target this issue, including neural estimates of
baselines \cite{Mnih2014,Mnih2016}, Rao-Blackwellization
\cite{Ranganath2014}, reparameterizable relaxations
\cite{Jang2017,Maddison2017}, and a mix of various techniques
\cite{Tucker2017,Grathwohl2018}. We found that an approach using
REINFORCE \cite{Williams1992} along with a specialized baseline was
effective. However, note that REINFORCE is only one of the inference choices we can select, and as we will show later, alternative approaches such as reparameterizable relaxations work as well. Formally, we first apply the likelihood-ratio trick to obtain an
expression for the gradient with respect to the inference network parameters $\phi$,
\[ \nabla_{\phi} \E_{z \sim q(z)}[\log p(y \given x,
  z)] = \E_{z \sim q(z)} [ ( \log f(x,z) -
  B)\nabla_\phi \log q(z) ] \]
As with hard attention, we take a single Monte Carlo sample (now drawn from the
variational distribution). Variance reduction of this estimate falls
to the baseline term $B$. The ideal (and intuitive) baseline would be
$\E_{z \sim q(z)}[\log f(x,z)]$, analogous to the value function in reinforcement learning. While
this term cannot be easily computed, there is a natural, cheap
approximation: soft attention (i.e.  $\log f(x, \E[z])$). Then the gradient is 
\begin{equation}
 \E_{z \sim q(z)} \left[ \left( \log \frac{ f(x,z) }{ f(x, \E_{z' \sim p(z' \given x, \tilde{x})}[z'])}\right) \nabla_\phi \log q(z \given x, \tilde{x}) \right]  \label{eq:policy}
\end{equation}
Effectively this weights gradients to $q$ based on the ratio of the
inference network alignment approach to a soft attention
baseline. Notably the expectation in the soft attention is over $p$
(and not over $q$), and therefore the baseline is constant with
respect to $\phi$. Note that a similar baseline can also be used for 
hard attention, and we apply it to  both variational/hard attention models in our experiments.

\paragraph{Algorithm 2: Relaxed Alignments}

Next consider treating both $\mcD$ and $\cal Q$ as Dirichlets, where
$z$ represents a mixture of indices. This model is in some sense
closer to the soft attention formulation which assigns mass to
multiple indices, though fundamentally different in that we still formally treat alignment as a latent variable. Again the aim is to find a low variance gradient estimator. Instead of using REINFORCE, certain continuous distributions allow the
use reparameterization \cite{Kingma2014}, where sampling $z \sim q(z)$
can be done by first sampling from a simple unparameterized
distribution $\mathcal{U}$, and then applying a transformation
$g_{\phi}(\cdot)$, yielding an unbiased estimator,
\[ \E_{u \sim \mathcal{U}} \left[\nabla_\phi \log p(y|x,g_\phi(u))\right ] - \nabla_\phi \KL\left [q(z) \, \Vert \, p(z \given x, \tilde{x})\right] \]

The Dirichlet distribution is not directly reparameterizable. While transforming the standard uniform distribution with the inverse CDF of Dirichlet would result in a Dirichlet distribution, the inverse CDF does not have an analytical solution. However, we can use rejection based sampling to get a sample, and employ implicit differentiation to estimate the gradient of the CDF \cite{jankowiak2018pathwise}.

Empirically, we found the random initialization would result in
convergence to uniform Dirichlet parameters for $\lambda$. (We suspect that it is
easier to find low KL local optima towards the center of the
simplex). In experiments, we therefore initialize the latent alignment
model by first minimizing the Jensen bound,
$\E_{z\sim p(z\given x, \tilde{x})}[\log p(y \given x, z)]$, and then introducing the inference network.

\section{Models and Methods}

We experiment with variational attention in two different domains
where attention-based models are essential and widely-used: neural
machine translation and visual question answering.

\paragraph{Neural Machine Translation}

Neural machine translation (NMT) takes in a source sentence
and predicts each word of a target sentence $y_j$ in
an auto-regressive manner. The model first contextually embeds each
source word using a bidirectional LSTM to produce the vectors
$x_1 \ldots x_T$.  The query $\tilde{x}$ consists of an LSTM-based
representation of the previous target words $y_{1:j-1}$. Attention is
used to identify which source positions should be used to
predict the target. The parameters of $\mcD$ are generated
from an MLP between the query and source \cite{Bahdanau2015}, and $f$
concatenates the selected $x_i$ with the query $\tilde{x}$ and passes
it to an MLP to produce the distribution over the next target word
$y_j$.

For variational attention, the inference network applies a
bidirectional LSTM over the source and the target to obtain the hidden
states $x_1, \dots, x_T$ and $h_1, \dots, h_S$, and produces the
alignment scores at the $j$-th time step via a bilinear map,
$s_{i}^{(j)} = \exp(h_j^\top \mathbf{U} x_i)$. For the categorical
case, the scores are normalized,
$q(z^{(j)}_i = 1 ) \propto s_{i}^{(j)}$; in the relaxed case the
parameters of the Dirichlet are $\alpha_i^{(j)} = s_i^{(j)}$. Note,
the inference network sees the entire target (through bidirectional
LSTMs).  The word embeddings are shared between the
generative/inference networks, but other parameters are separate.

\paragraph{Visual Question Answering}
Visual question answering (VQA) uses attention to locate the parts of
an image that are necessary to answer a textual question.  We follow
the recently-proposed ``bottom-up top-down'' attention approach
\cite{Anderson2018}, which uses Faster R-CNN \cite{Ren2015} to obtain
object bounding boxes and performs mean-pooling over the convolutional
features (from a pretrained ResNet-101 \cite{He2016}) in each bounding box to obtain object representations
$x_1, \dots, x_T$. The query $\tilde{x}$ is obtained by running an
LSTM over the question, the attention function $a$ passes the query
and the object representation through an MLP. The prediction function
$f$ is also similar to the NMT case: we concatenate the chosen $x_i$
with the query $\tilde{x}$ to use as input to an MLP which produces a
distribution over the output.  The inference network $enc$ uses the
answer embedding $h_y$ and combines it with $x_i$ and $\tilde{x}$ to
produce the variational (categorical) distribution,
\[ q(z_i = 1) \propto \exp(u^\top \tanh (\mathbf{U}_1 (x_i \odot
  \RELU(\mathbf{V}_1h_y)) + \mathbf{U}_2 (\tilde{x} \odot
  \RELU(\mathbf{V}_2h_y) ))) \]
where $\odot$ is the element-wise product. This parameterization
worked better than alternatives. We did not experiment with the
relaxed case in VQA, as the object bounding boxes already give us
the ability to attend to larger portions of the image.

\paragraph{Inference Alternatives}

For categorical alignments we described maximizing a particular variational lower bound with REINFORCE. Note that other alternatives exist, and we briefly discuss them here: 1) instead of the single-sample variational bound we can use a multiple-sample importance sampling based approach such as Reweighted Wake-Sleep (RWS) \cite{Ba2015} or VIMCO \cite{mnih2016variational}; 2) instead of REINFORCE we can approximate sampling from the discrete categorical distribution with Gumbel-Softmax \cite{jang2016categorical}; 3) instead of using an inference network we can directly apply Stochastic Variational Inference (SVI) \cite{hoffman2013stochastic} to learn the local variational parameters in the posterior.

\paragraph{Predictive Inference}

At test time, we need to marginalize out the latent variables, i.e.
$\E_z [p(y \given x, \tilde{x}, z)]$ using $p(z \given x, \tilde{x})$.  In the
categorical case, if speed is not an issue then enumerating
alignments is preferable, which incurs a multiplicative cost of $O(T)$
(but the enumeration is parallelizable). Alternatively we experimented
with a $K$-max renormalization, where we only take the top-$K$
attention scores to approximate the attention distribution (by re-normalizing). This makes the multiplicative cost constant with
respect to $T$. 
For the relaxed case, sampling is necessary.

\section{Experiments}
\paragraph{Setup}
For NMT we mainly use the IWSLT dataset \cite{Cettolo2017}. This dataset is
relatively small, but has become a standard benchmark for experimental
NMT models. We follow the same preprocessing as in \cite{Edunov2017}
with the same Byte Pair Encoding vocabulary of 14k tokens \cite{sennrich2016}. To show that variational attention scales to large datasets, we also experiment on the WMT 2017 English-German dataset \cite{W17-4700}, following the preprocessing in \cite{vaswani2017attention} except that we use newstest2017 as our test set. For VQA, we use the VQA
2.0 dataset. As we are interested in intrinsic evaluation (i.e. log-likelihood)
in addition to the standard VQA metric, we randomly select half of the
standard validation set as the test set (since we need access to the
actual labels).\footnote{ VQA eval metric is defined as
  $\min\{ \frac{\# \text{ humans that said answer }}{3}, 1\}$. Also
  note that since there are sometimes multiple answers for a given
  question, in such cases we sample (where the sampling probability is proportional to the number of humans that said the answer) to get a single
  label.} (Therefore the numbers provided are not strictly comparable
to existing work.) While the preprocessing is the same as
\cite{Anderson2018}, our numbers are worse than previously reported as
we do not apply any of the commonly-utilized techniques to improve
performance on VQA such as data augmentation and label smoothing.

Experiments vary three components of the systems: (a) training
objective and model, (b) training approximations, comparing enumeration or
sampling,\footnote{Note that enumeration does not imply exact if we
  are enumerating an expectation on a lower bound.} (c) test
inference.  All neural models have the same 
architecture and the exact same number of parameters $\theta$ (the
inference network parameters $\phi$ vary, but are not used at test).
When training hard and variational attention with sampling both use the same baseline,
i.e the output from soft attention. The full architectures/hyperparameters for both NMT and VQA are given in Appendix B.

\paragraph{Results and Discussion}

Table~\ref{tab:eval} shows the main results. We
first note that hard attention underperforms soft attention, even
when its expectation is enumerated. This indicates that Jensen's
inequality alone is a poor bound. On the other hand, on both experiments, 
exact marginal likelihood outperforms soft attention, indicating 
that when possible it is better to have latent alignments.

For NMT, on the IWSLT 2014 German-English task, variational attention with enumeration and sampling performs
comparably to optimizing the log marginal likelihood, despite
the fact that it is optimizing a lower bound. We believe that this is
due to the use of $q(z)$, which conditions on the entire source/target and therefore
potentially provides better training signal to $p(z \, | \, x, \tilde{x})$ through the KL
term. Note that it is also
possible to have $q(z)$ come from a pretrained external model, such as a traditional alignment model \cite{dyer13}.
Table~\ref{tab:ext} (left) shows
these results in context compared to the best reported values for this
task. Even with sampling, our system improves on the state-of-the-art. On the larger WMT 2017 English-German task, the superior performance of variational attention persists: our baseline soft attention reaches 24.10 BLEU score, while variational attention reaches 24.98. Note that this only reflects a reasonable setting without exhaustive tuning, yet we show that we can train variational attention at scale. 
For VQA the trend is largely similar, and results for NLL with
variational attention improve on soft attention and hard
attention. However the task-specific evaluation metrics are slightly
worse.

\begin{table}
\small
  \centering
  \begin{tabular}{llcrrrrrrrr}
    \toprule
         & & & \multicolumn{2}{c}{NMT}  & &\multicolumn{2}{c}{VQA} \\
    Model & Objective & $\E$   & PPL &  BLEU & &NLL &   Eval \\
    \midrule
  Soft Attention& $\log p(y \given \E[z])$ & - & 7.17 &  32.77& & 1.76& 58.93 \\
  Marginal Likelihood & $\log \E[p]$ &  Enum & 6.34  &33.29&   & 1.69 & 60.33\\
  Hard Attention  & $\E_p[\log p]$ & Enum & 7.37 &  31.40&  &  1.78 & 57.60 \\
  Hard Attention  & $\E_p[\log p]$  & Sample &  7.38 &   31.00&  & 1.82 &  56.30 \\
   Variational Relaxed Attention& $\E_q[\log p] - \KL$ & Sample & 7.58	 & 30.05 & & -& - \\
  Variational Attention  & $\E_q[\log p] - \KL$ & Enum & 6.08&  33.68 & &1.69& 58.44\\
  Variational Attention &  $\E_q[\log p] - \KL$ &Sample& 6.17&  33.30  &  & 1.75 & 57.52\\
    \bottomrule
  \end{tabular}
\vspace{2mm}
  \caption{Evaluation on NMT and VQA for the various models. $\E$ column indicates
  whether the expectation is calculated via enumeration (Enum) or a single sample (Sample) during training.
  For NMT we evaluate intrinsically on perplexity (PPL) (lower is better)
  and extrinsically on BLEU (higher is better), where for BLEU we perform beam search with beam size 10 and length penalty (see Appendix B for further details). For VQA we evaluate intrinsically on negative log-likelihood (NLL) (lower is better)
  and extrinsically on VQA evaluation metric (higher is better). All results except for relaxed attention use enumeration at test time. 
\vspace*{-0.8cm}
  }
   \label{tab:eval}
  \end{table}
\begin{table}
\small 
  \begin{minipage}{.6\linewidth}
\small
  \begin{tabular}{lcccc}
    \toprule
     & \multicolumn{2}{c}{PPL} & \multicolumn{2}{c}{BLEU} \\
    Model & Exact & $K$-Max & Exact & $K$-Max \\
    \midrule
   
  Marginal Likelihood & 6.34& 6.90 & 33.29 & 33.31  \\
  Hard + Enum  & 7.37 & 7.37 & 31.40 & 31.37 \\
  Hard + Sample & 7.38 & 7.38 & 31.00 &31.04 \\
  Variational + Enum & 6.08  & 6.42 & 33.68 & 33.69\\
  Variational + Sample & 6.17 & 6.51 & 33.30 & 33.27\\
    \bottomrule
  \end{tabular}
   \end{minipage}
  \begin{minipage}{.5\linewidth}
  \includegraphics[width=0.8\textwidth]{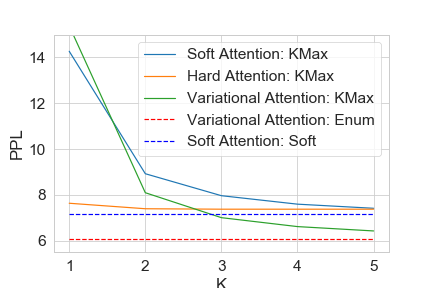}
   \end{minipage}
        \caption{
         (Left) Performance change on NMT from exact decoding to $K$-Max decoding with $K =5$. (see section 5 for definition of
         K-max decoding). (Right) \label{fig:sample} Test perplexity of different approaches while varying $K$ to estimate $\E_z[p(y | x, \tilde{x})]$. Dotted lines compare soft baseline and variational with full enumeration.}
   \label{tab:evalb}
   \vspace*{-0.6cm}
\end{table}

Table~\ref{tab:evalb} (left) considers test inference for variational
attention, comparing enumeration to $K$-max with $K=5$.  For all
methods exact enumeration is better, however $K$-max is a reasonable
approximation. Table~\ref{tab:evalb} (right) shows the PPL of different
models as we increase $K$. Good performance requires
$K > 1$, but we only get marginal benefits for $K >
5$. Finally, we observe that it is possible
to \emph{train} with soft attention and \emph{test} using $K$-Max with a small performance
drop (\texttt{Soft KMax} in Table~\ref{tab:evalb} (right)).
This possibly indicates that soft attention models are approximating
latent alignment models. On the other hand, training with latent alignments 
and testing with soft attention performed badly.

Table~\ref{tab:ext} (lower right) looks at the entropy of the prior
distribution learned by the different models. Note that hard attention has very low
entropy (high certainty) whereas soft attention is quite high. The variational
attention model falls in between. Figure~\ref{fig:attn} (left)
illustrates the difference in practice.

Table~\ref{tab:ext} (upper right) compares inference alternatives for variational attention. RWS reaches a comparable performance as REINFORCE, but at a higher memory cost as it requires multiple samples. Gumbel-Softmax reaches nearly the same performance and seems 
like a viable alternative; although we found its performance is sensitive to its temperature parameter. We also trained a non-amortized SVI model, but found that at similar runtime it was not able to produce satisfactory results, likely due to insufficient updates of the local variational parameters. A hybrid method such as semi-amortized inference \cite{Krishnan2017,kim2018semi} might be a potential future direction worth exploring.

Despite extensive experiments, we found that variational relaxed attention performed worse than other methods. In particular we found that when training with a Dirichlet KL, it is hard to reach low-entropy regions of the simplex, and the attentions are more uniform than either soft or variational categorical attention. Table~\ref{tab:ext} (lower right) quantifies this issue. 
We experimented with other distributions such as Logistic-Normal and Gumbel-Softmax \cite{Jang2017,Maddison2017} but neither fixed this issue. Others have also noted difficulty in training 
Dirichlet models with amortized inference \cite{srivastava2017autoencoding}. 

Besides performance, an advantage of these models is the ability to
perform posterior inference, since the $q$ function can be used
directly to obtain posterior alignments. Contrast this with hard attention
where $q = p(z \given x, \tilde{x})$, i.e. the variational posterior
is independent of the future information.  Figure~\ref{fig:attn} shows
the alignments of $p$ and $q$ for variational attention over a fixed
sentence (see Appendix C for more examples). We see that $q$ is able to use future information to correct
alignments. We note that the inability of soft and hard attention to
produce good alignments has been noted as a major issue in NMT
\cite{koehn2017six}. While $q$ is not used directly in left-to-right
NMT decoding, it could be employed for other applications such as in an
iterative refinement approach \cite{Novak2016,Lee2018}.

\begin{figure}[t]
  \centering
  \begin{subfigure}[b]{0.49\textwidth}
  \centering
  \includegraphics[height=5.4cm]{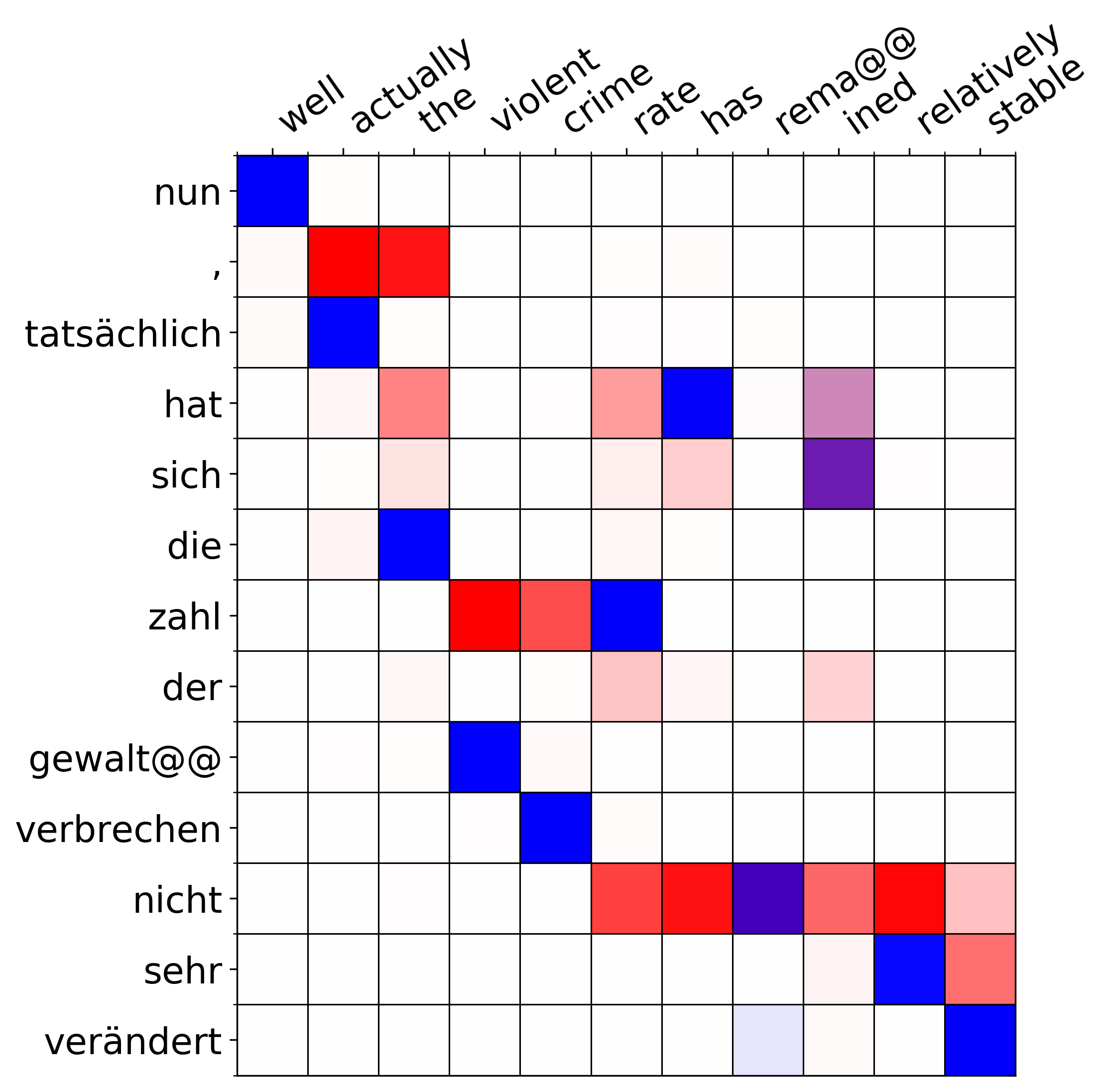}
 
  \label{fig:pq:a}
  \end{subfigure}
  \begin{subfigure}[b]{0.49\textwidth}
  \centering
  \includegraphics[height=5.4cm]{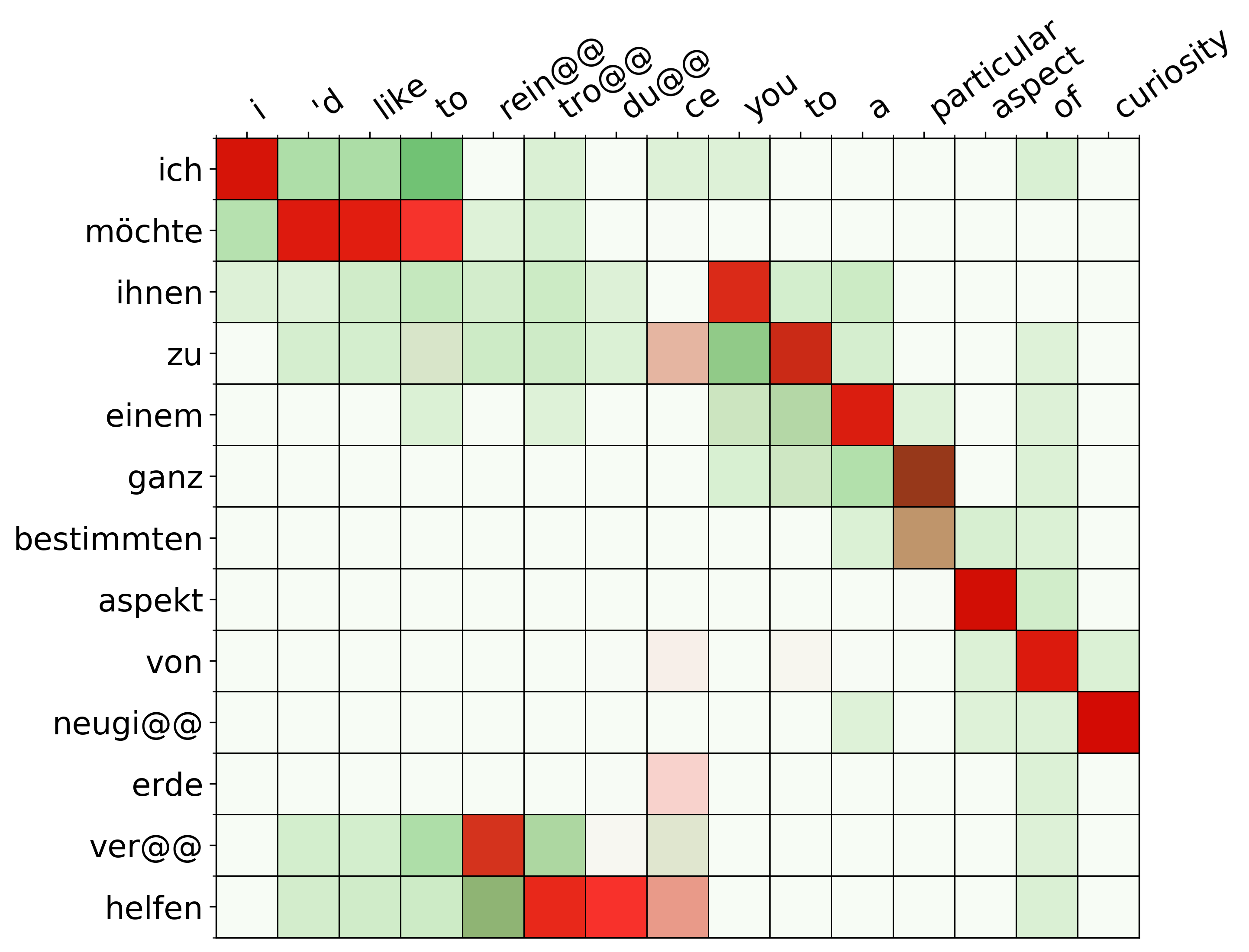}
  \label{fig:pq:b}
  \end{subfigure}
  
  \caption{\label{fig:attn} (Left) An example demonstrating the difference between the prior alignment (red) and the variational posterior (blue) when translating from DE-EN (left-to-right). 
Note the improved blue alignments for \texttt{actually} and \texttt{violent} which benefit 
from seeing the next word. (Right) Comparison of soft attention (green) with the $p$ of variational attention (red). Both models imply a similar alignment, but variational attention has lower entropy.}
  \label{fig:pq}
\end{figure}

\begin{table}
\vspace*{-0.1cm}
\small
\centering
\begin{minipage}{.43\linewidth}
   \begin{tabular}{lc}
       \toprule\\
       & IWSLT \\
    Model & BLEU \\
    \midrule
Beam Search Optimization \cite{Wiseman2016} & 26.36 \\
Actor-Critic \cite{Bahdanau2017} & 28.53 \\
Neural PBMT + LM \cite{Huang2018} & 30.08 \\
Minimum Risk  Training \cite{Edunov2017} & 32.84 \\
     \midrule
  Soft Attention&  32.77\\
  Marginal Likelihood &33.29\\
  Hard Attention + Enum  & 31.40 \\
  Hard Attention + Sample &  30.42 \\
    Variational Relaxed Attention & 30.05 \\
  Variational Attention + Enum & 33.69 \\
  Variational Attention + Sample & 33.30 \\
    \bottomrule
   \end{tabular}
      \end{minipage}\hspace*{0.5cm}
      \begin{minipage}{.43\linewidth}
      \centering
         \begin{tabular}{lccc}
       \toprule
    Inference Method & \#Samples & PPL & BLEU \\
    \midrule
  REINFORCE & 1 & 6.17 & 33.30 \\
  RWS & 5 & 6.41 & 32.96 \\
  Gumbel-Softmax  & 1 & 6.51 & 33.08\\
    \bottomrule
   \end{tabular}
   \begin{tabular}{lcc}
       \toprule
       & \multicolumn{2}{c}{Entropy} \\
    Model & NMT & VQA \\
    \midrule
  Soft Attention&  1.24 & 2.70 \\
  Marginal Likelihood & 0.82 & 2.66 \\
  Hard Attention + Enum  & 0.05 & 0.73\\
  Hard Attention + Sample &  0.07 & 0.58\\
  Variational Relaxed Attention & 2.02 & -\\
  Variational Attention + Enum & 0.54 & 2.07\\
  Variational Attention + Sample & 0.52 & 2.44 \\
    \bottomrule
   \end{tabular}
   \end{minipage}

   \caption{\label{tab:ext} (Left) Comparison against the best prior work for NMT on the IWSLT 2014 German-English test set. (Upper Right) Comparison of inference alternatives of variational attention on IWSLT 2014. (Lower Right) Comparison of different models in terms of implied discrete entropy (lower = more certain alignment).}
   \vspace*{-0.6cm}
   \end{table}

\paragraph{Potential Limitations}

While this technique is a promising alternative to soft attention,
there are some practical limitations: (a) Variational/hard attention
needs a good baseline estimator in the form of soft attention. We
found this to be a necessary component for adequately training the
system.  This may prevent this technique from working when $T$ is
intractably large and soft attention is not an option.  (b) For some
applications, the model relies heavily on having a good posterior
estimator. In VQA we had to utilize domain structure for the inference network construction.  (c) Recent models
such as the Transformer \cite{vaswani2017attention}, utilize many
repeated attention models. For instance the current best translation
models have the equivalent of 150 different attention queries per word
translated. It is unclear if this approach can be used at that scale as 
predictive inference becomes combinatorial.
 
\section{Conclusion}

Attention methods are ubiquitous tool for areas
like natural language processing; however they are difficult to 
use as latent variable models. This work explores
alternative approaches to latent alignment, through variational attention with promising result. Future work
will experiment with scaling the method on larger-scale tasks and in more
complex models, such as multi-hop attention models, transformer models, and structured models, as well as utilizing these latent variables for interpretability and as a way to incorporate prior knowledge.

\section*{Acknowledgements}
We are grateful to Sam Wiseman and Rachit Singh for insightful comments and discussion, as well as Christian Puhrsch for help with translations. This project was supported by a Facebook Research Award (Low Resource NMT). YK is supported by a Google AI PhD Fellowship. YD is supported by a Bloomberg Research Award. AMR gratefully acknowledges the support of NSF CCF-1704834 and an Amazon AWS Research award.

{
\bibliographystyle{plain}
\small
\bibliography{nips_2018}}
\newpage
\vspace{5mm}
{\centering
{\LARGE \textbf{Supplementary Materials for  \\
\vspace{3mm}
\hspace{9mm} Latent Alignment and Variational Attention}}}
\vspace{1cm}
\section*{Appendix A: Proof of Proposition 1}
\begin{proposition*}
Define $g_{x, \hat{y}} : \Delta^{T-1} \mapsto [0,1]$ to be the function given by $g_{x, \hat{y}}(z) = f(x, z)_{\hat{y}}$ (i.e. $g_{x, \hat{y}}(z)=p(y=\hat{y} \given x, \tilde{x}, z))$ for a twice differentiable function $f$. Let $H_{g_{x,\hat{y}}}(z)$ be the Hessian of $g_{x,\hat{y}}(z)$ evaluated at $z$, and further suppose $\Vert H_{g_{x,\hat{y}}}(z) \Vert_2 \le c$ for all $z \in \Delta^{T-1}, \hat{y} \in \mathcal{Y}$, and $x$, where $\Vert \cdot \Vert_2$ is the spectral norm. Then for all $\hat{y} \in \mathcal{Y}$,
\[ \, | \, p(y = \hat{y} \given x, \tilde{x}) -  p_\textup{soft}(y = \hat{y} \given x, \tilde{x}) \,| \le c \]
\end{proposition*}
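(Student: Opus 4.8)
The plan is to recognize the quantity as a \emph{Jensen gap} and to bound it via a second-order Taylor expansion of $f$ about the mean alignment. Fix $x$, $\tilde x$, and $\hat y$, write $g = g_{x,\hat y}$ for brevity, and let $\bar z = \E_z[z]$ denote the mean of the alignment distribution $p(z \given x, \tilde x)$. By the definitions in Section~2, the true marginal is $p(y = \hat y \given x, \tilde x) = \E_z[f(x,z)_{\hat y}] = \E_z[g(z)]$, whereas soft attention evaluates $f$ at the mean, $p_\textup{soft}(y = \hat y \given x, \tilde x) = f(x, \bar z)_{\hat y} = g(\bar z)$. Hence the object to bound is exactly the Jensen gap $|\E_z[g(z)] - g(\bar z)|$.

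First I would Taylor-expand $g$ around $\bar z$. Since $\Delta^{T-1}$ is convex, the segment joining any $z$ to $\bar z$ lies in the simplex, so Taylor's theorem with the (mean-value or integral) remainder gives, for each $z$, a point $\xi_z$ on that segment with
\[ g(z) = g(\bar z) + \nabla g(\bar z)^\top (z - \bar z) + \tfrac{1}{2}(z - \bar z)^\top H_g(\xi_z)(z - \bar z). \]
Taking expectations over $z$ and using $\E_z[z - \bar z] = 0$, the linear term vanishes, leaving $\E_z[g(z)] - g(\bar z) = \tfrac{1}{2}\E_z[(z - \bar z)^\top H_g(\xi_z)(z - \bar z)]$.

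The final step is to bound the remainder. For each $z$ the quadratic form satisfies $|(z - \bar z)^\top H_g(\xi_z)(z - \bar z)| \le \Vert H_g(\xi_z)\Vert_2 \, \Vert z - \bar z\Vert_2^2 \le c\,\Vert z - \bar z\Vert_2^2$, using the hypothesis that the Hessian spectral norm is bounded by $c$ everywhere on the simplex; here convexity matters, since $\xi_z \in \Delta^{T-1}$ guarantees the bound applies at the intermediate point. Taking expectations, $|\E_z[g(z)] - g(\bar z)| \le \tfrac{c}{2}\,\E_z[\Vert z - \bar z\Vert_2^2]$. Because every $z \in \Delta^{T-1}$ obeys $\Vert z\Vert_2 \le \Vert z\Vert_1 = 1$, the variance term is controlled: $\E_z[\Vert z - \bar z\Vert_2^2] = \E_z[\Vert z\Vert_2^2] - \Vert\bar z\Vert_2^2 \le 1$, which yields $|\E_z[g(z)] - g(\bar z)| \le c/2 \le c$ and completes the proof.

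The argument is essentially mechanical, so the main thing to be careful about is the remainder term: with only twice-differentiability the integral form of Taylor's remainder is the cleanest to justify, and one must invoke convexity of $\Delta^{T-1}$ to ensure the intermediate point $\xi_z$ lies in the region where the Hessian bound is assumed. It is worth noting that the computation actually delivers the stronger constant $c/2$, so the stated bound of $c$ holds with room to spare.
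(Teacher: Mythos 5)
Your proof is correct and follows essentially the same route as the paper's: a second-order Taylor expansion of $g_{x,\hat y}$ about $\E[z]$, cancellation of the linear term, and a spectral-norm bound on the quadratic remainder. The only difference is in the last step, where you bound $\E[\Vert z - \E[z]\Vert_2^2] \le 1$ via the variance identity and $\Vert z\Vert_2 \le \Vert z\Vert_1 = 1$, rather than the paper's pointwise bound $\Vert z - \E[z]\Vert_2^2 \le 2$; this yields the slightly sharper constant $c/2$ while still establishing the stated bound.
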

\begin{proof}
We begin by performing Taylor's expansion of $g_{x,\hat{y}}$ at $\E[z]$:
\begin{align*}
\E[g_{x, \hat{y}}(z)] &= \E\Big[g_{x,\hat{y}}(\E[z]) + (z - \E[z])^\top \nabla g_{x,\hat{y}}(\E[z]) + \frac{1}{2} (z-\E[z])^\top H_{g_{x,\hat{y}}}(\hat{z})(z-\E[z])\Big] \\
&= g_{x,\hat{y}}(\E[z]) + \frac{1}{2}\E [(z-\E[z])^\top H_{g_{x,\hat{y}}}(\hat{z})(z-\E[z])]
\end{align*}
for some $\hat{z} = \lambda z + (1-\lambda) \E[z], \lambda \in [0, 1]$.
Then letting $u = z-\E[z]$, we have
\begin{align*}
|\, (z-\E[z])^\top H_{g_{x,\hat{y}}}(\hat{z}) (z-\E[z])\,|  &= | \, \Vert u \Vert_2^2 \frac{u^\top}{\Vert u \Vert_2} H_{g_{x,\hat{y}}}(\hat{z})  \frac{u}{\Vert u \Vert_2} \,|\\
& \le \Vert u \Vert_2^2 \, c\\
\end{align*}
where $c = \max \{ |\lambda_{\max}|, |\lambda_{\min}| \}$ is the largest absolute eigenvalue of $H_{g_{x,\hat{y}}}(\hat{z})$. (Here $\lambda_{\max}$ and $\lambda_{\min}$
are maximum/minimum eigenvalues of $ H_{g_{X,q}}(\hat{z})$).
Note that $c$ is also equal to the spectral norm $\Vert H_{g_{X,q}}(\hat{z}) \Vert_2$ since the Hessian is symmetric.

Then,
\begin{align*}
| \, \E [(z-\E[z])^\top H_{g_{x,\hat{y}}}(\hat{z})(z-\E[z])] \, | &\le
\E [\, |(z-\E[z])^\top H_{g_{x,\hat{y}}}(\hat{z})(z-\E[z]) \, |]  \\
&\le \E [\Vert u \Vert_2^2 c] \\
&\le 2c\\
\end{align*}
Here the first inequality follows due to the convexity of the absolute value function and the last inequality follows since
\begin{align*}
\Vert u \Vert_2^2  &= (z-\E[z])^\top(z-\E[z]) \\
& = z^\top z + \E[z]^\top \E[z] - 2\E[z]^\top z \\
&\le z^\top z +  \E[z]^\top \E[z] \\
& \le 2
\end{align*}
where the last two inequalities are due to the fact that $z, \E[z] \in \Delta^{T-1}$. Then putting it all together we have,
\begin{align*}
\, | \, p(y = \hat{y} \given x, \tilde{x}) -  p_\textup{soft}(y = \hat{y} \given x, \tilde{x}) \,| &= |\,\E[g_{x, \hat{y}}(z)] - g_{x,\hat{y}}(\E[z]) \,| \\
&= \frac{1}{2} | \, \E [(z-\E[z])^\top H_{g_{x,\hat{y}}}(\hat{z})(z-\E[z])] \, |  \\
&\le c
\end{align*}
\end{proof}

\section*{Appendix B: Experimental Setup}
\subsection*{Neural Machine Translation}

For data processing we closely follow the setup in \cite{Edunov2017}, which uses Byte Pair Encoding over the combined source/target training set to obtain a vocabulary size of 14,000 tokens. However, different from \cite{Edunov2017} which uses maximum sequence length of 175, for faster training we only train on sequences of length up to 125. 

The encoder is a two-layer bi-directional LSTM with 512 units in each direction, and the decoder as a two-layer LSTM with with 768 units. For the decoder, the convex combination of source hidden states at each time step from the attention distribution is used as additional input at the next time step.  Word embedding is 512-dimensional. 

The inference network consists of two bi-directional LSTMs (also two-layer and 512-dimensional each) which is run over the source/target to obtain the hidden states at each time step. These hidden states are combined using bilinear attention \cite{Luong2015} to produce the variational parameters. (In contrast the generative model uses MLP attention from \cite{Bahdanau2015}, though we saw little difference between the two parameterizations).
Only the word embedding is shared between the inference network and the generative model.

Other training details include: batch size of 6, dropout rate of 0.3, parameter initialization over a uniform distribution $\mathcal{U}[-0.1, 0.1]$, gradient norm clipping at 5, and training for 30 epochs with Adam (learning rate = 0.0003, $\beta_1 = $ 0.9, $\beta_2 = $ 0.999) \cite{Kingma2015} with a learning rate decay schedule which starts halving the learning rate if validation perplexity does not improve. Most models converged well before 30 epochs. 

For decoding we use beam search with beam size 10 and length penalty $\alpha = 1$, from \cite{Wu2016}. The length penalty added about 0.5 BLEU points across all the models.
\subsection*{Visual Question Answering}
The model first obtains object features by mean-pooling the pretrained ResNet-101 features \cite{He2016} (which are 2048-dimensional) over object regions given by Faster R-CNN \cite{Ren2015}.The ResNet features are kept fixed and not fine-tuned during training. We fix the maximum number of possible regions to be 36. For the question embedding we use a one-layer LSTM with 1024 units over word embeddings. The word embeddings are 300-dimensional and initialized with GloVe \cite{pennington2014glove}.
The generative model produces a distribution over the possible objects  via applying MLP attention, i.e.
\[ p(z_i = 1 \given x, \tilde{x}) \propto \exp(w^\top \tanh (\mathbf{W}_1 x_i + \mathbf{W}_2\tilde{x})) \]
The selected image region is concatenated with the question embedding and fed to a one-layer MLP with ReLU non-linearity and 1024 hidden units. 

The inference network produces a categorical distribution over the image regions by interacting the answer embedding $h_y$ (which are 256-dimensional and initialized randomly) with the question embedding $\tilde{x}$ and the image regions $x_i$,
\[ q(z_i = 1) \propto \exp(u^\top \tanh (\mathbf{U}_1 (x_i \odot
  \RELU(\mathbf{V}_1h_y)) + \mathbf{U}_2 (\tilde{x} \odot
  \RELU(\mathbf{V}_2h_y) ))) \]
where $\odot$ denotes element-wise multiplication. The generative/inference attention MLPs have 1024 hidden units each (i.e. $w, u \in \reals^{1024}$).
 
Other training details include: batch size of 512, dropout rate of 0.5 on the penultimate layer (i.e. before affine transformation into answer vocabulary), and training for 50 epochs with with Adam (learning rate = 0.0005, $\beta_1 = $ 0.9, $\beta_2 = $ 0.999) \cite{Kingma2015}. 

In cases where there is more than one answer for a given question/image pair, we randomly sample the answer, where the sampling probability is proportional to the number of humans who gave the answer. 
\newpage
\section*{Appendix C: Additional Visualizations}
\begin{figure}[H]
  \centering
  \begin{subfigure}[b]{0.49\textwidth}
  \centering
  \includegraphics[height=5cm]{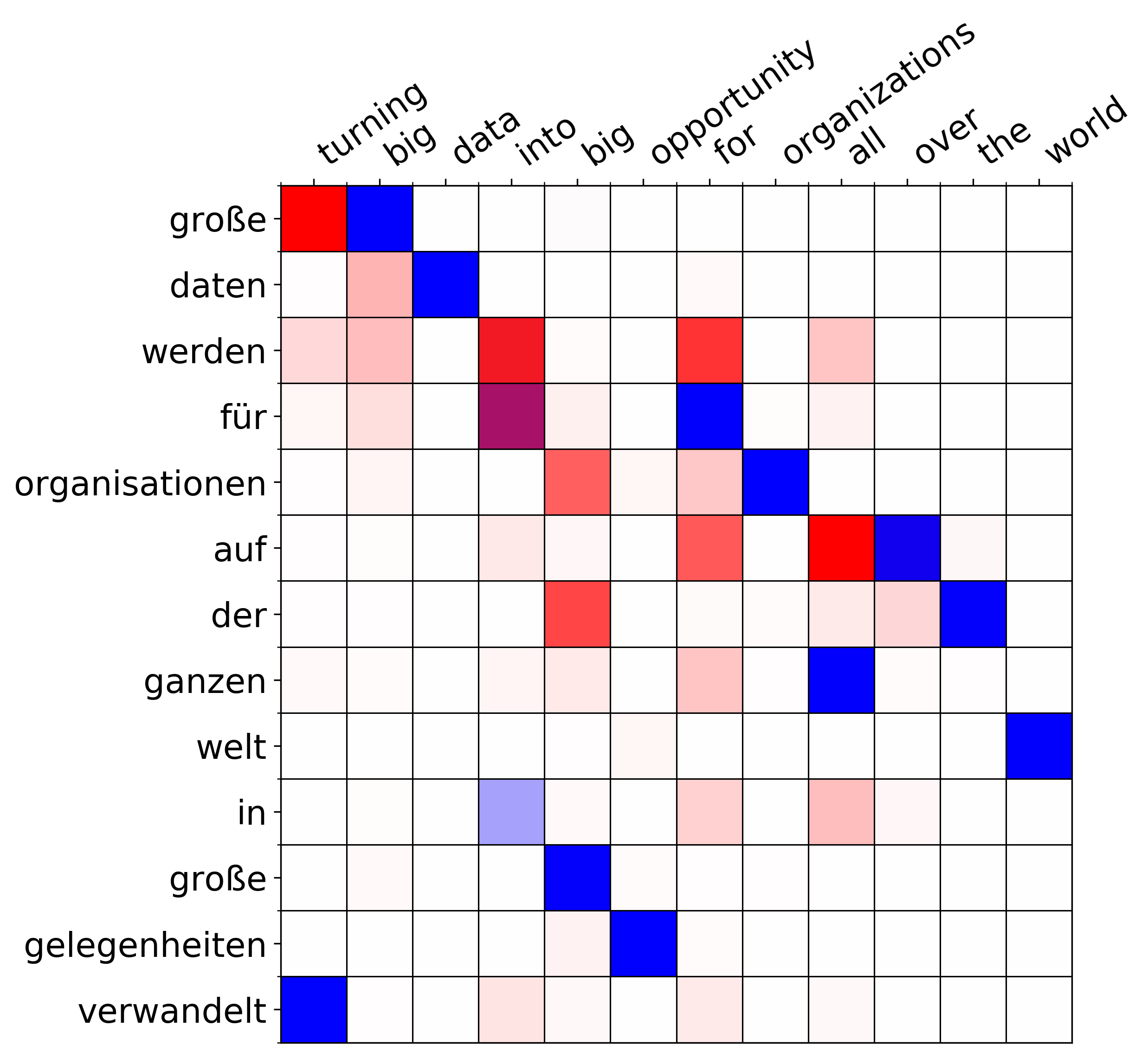}
  \label{fig:a2:a}
  \caption{}
  \end{subfigure}
  \begin{subfigure}[b]{0.49\textwidth}
  \centering
  \includegraphics[height=5cm]{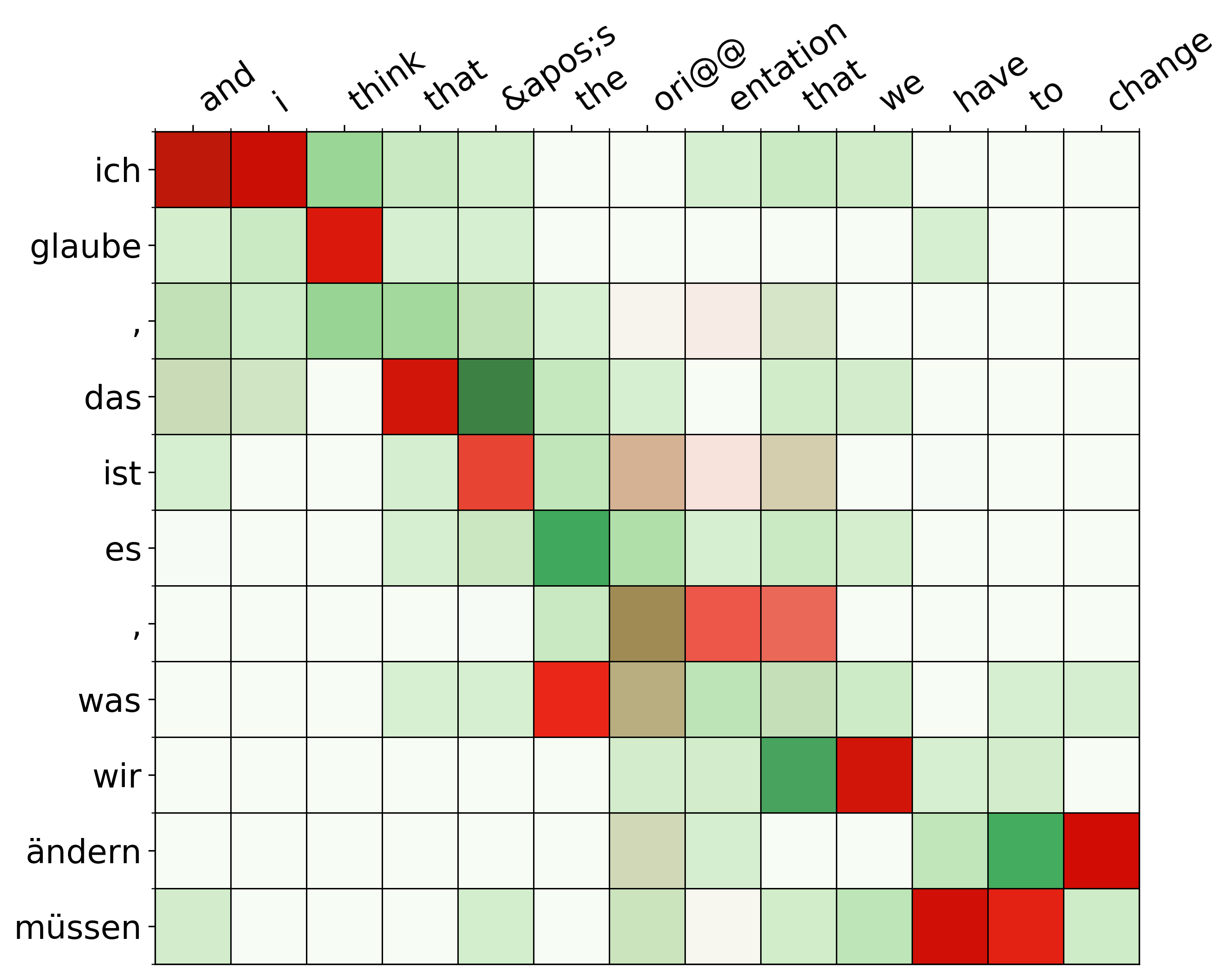}
  \label{fig:a2:b}
  \caption{}
  \end{subfigure}
  \medskip
  \begin{subfigure}[b]{0.49\textwidth}
  \centering
  \includegraphics[height=5cm]{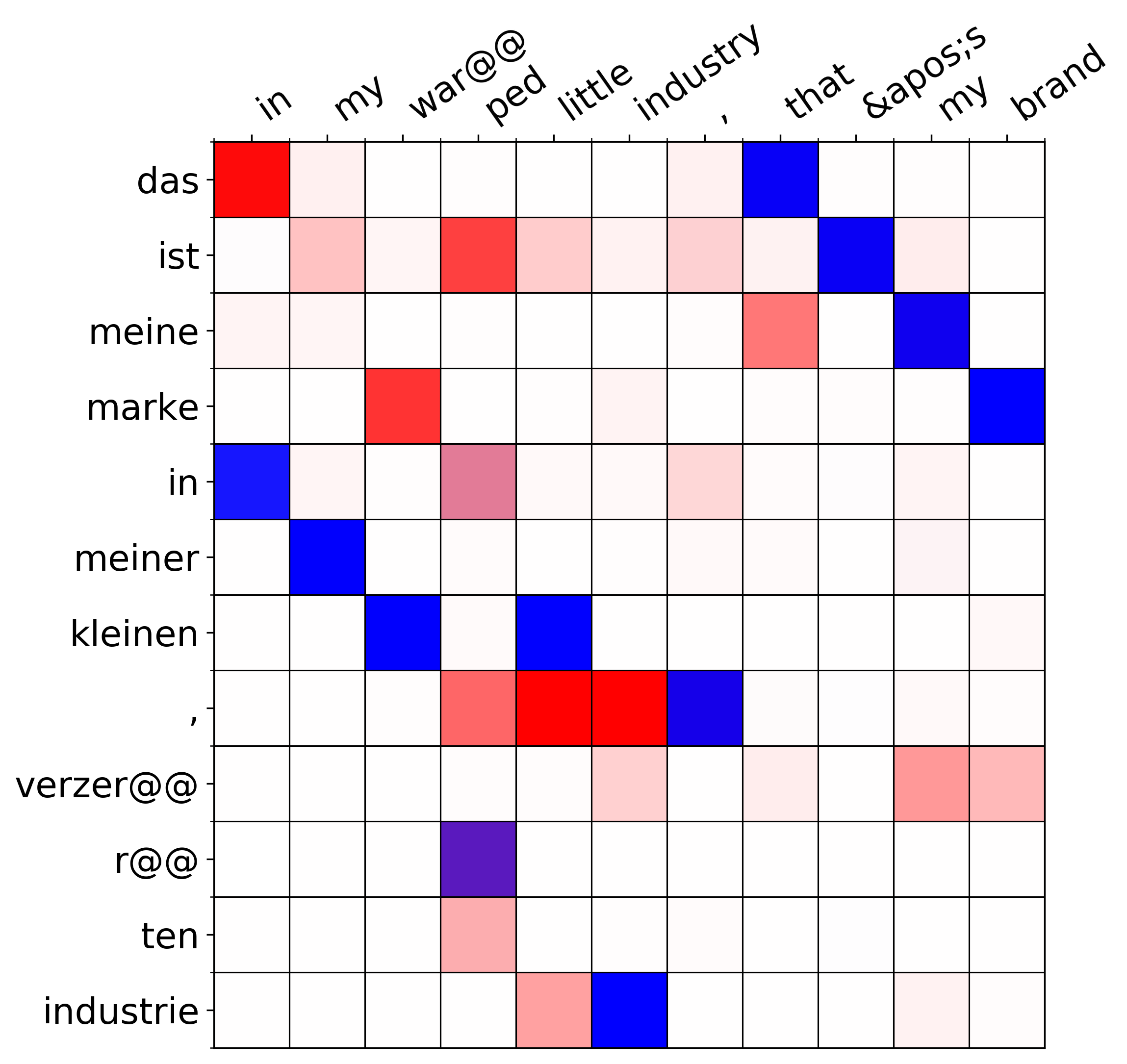}
  \label{fig:a2:c}
  \caption{}
  \end{subfigure}
  \begin{subfigure}[b]{0.49\textwidth}
  \centering
  \includegraphics[height=5cm]{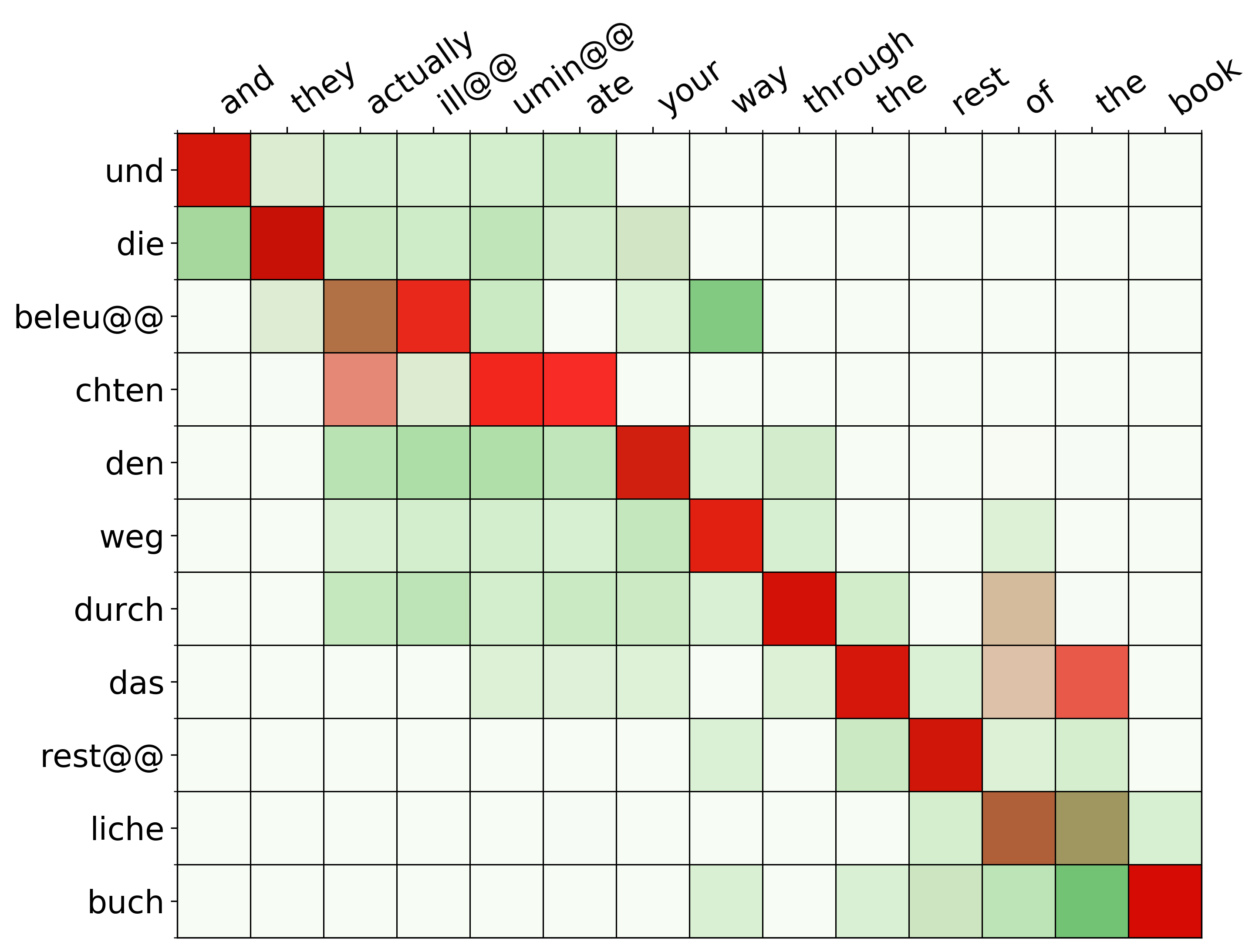}
  \label{fig:a2:d}
  \caption{}
  \end{subfigure}
  \medskip
  \begin{subfigure}[b]{0.49\textwidth}
  \centering
  \includegraphics[height=5cm]{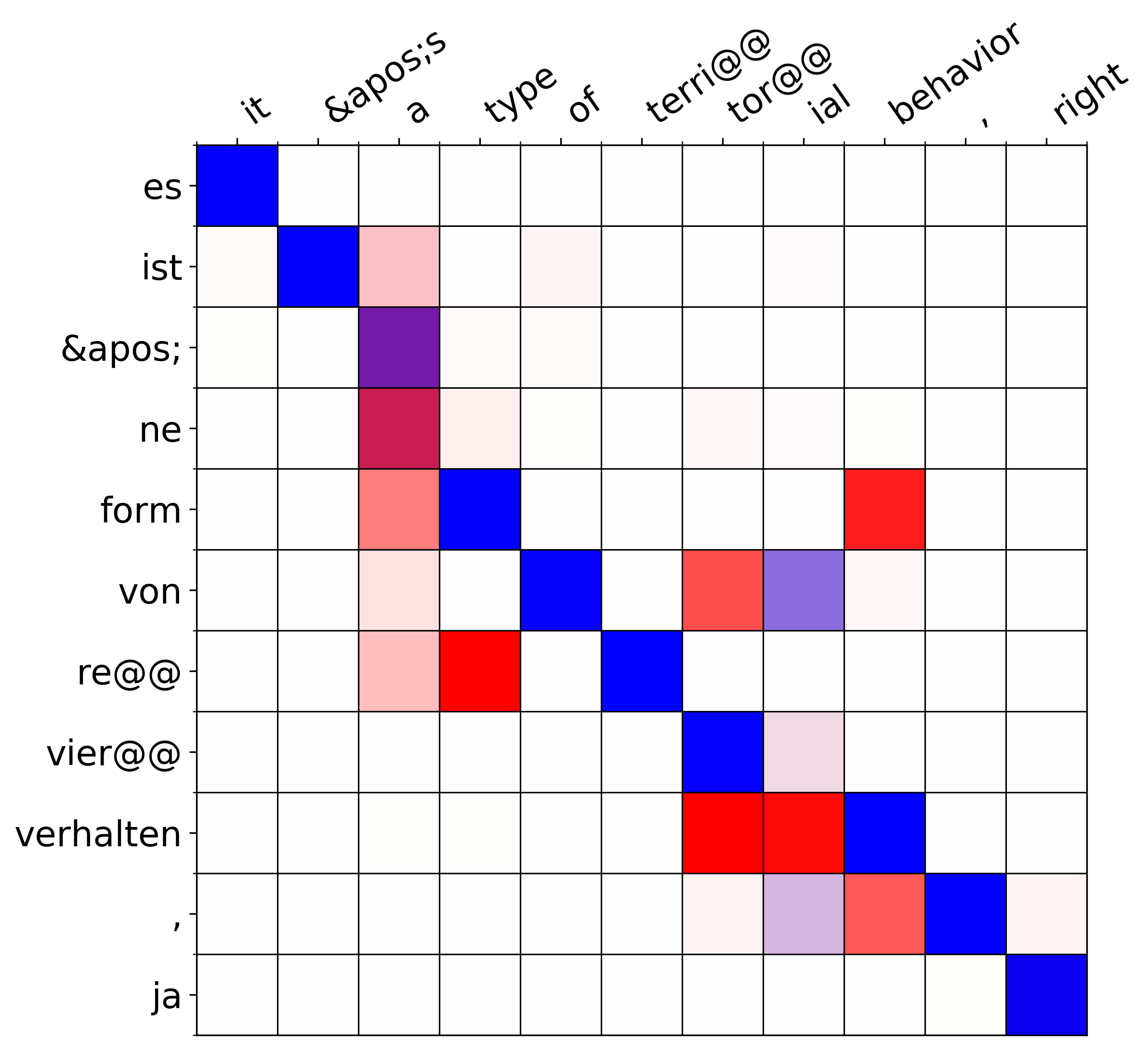}
  \label{fig:a2:e}
  \caption{}
  \end{subfigure}
  \begin{subfigure}[b]{0.49\textwidth}
  \centering
  \includegraphics[height=5cm]{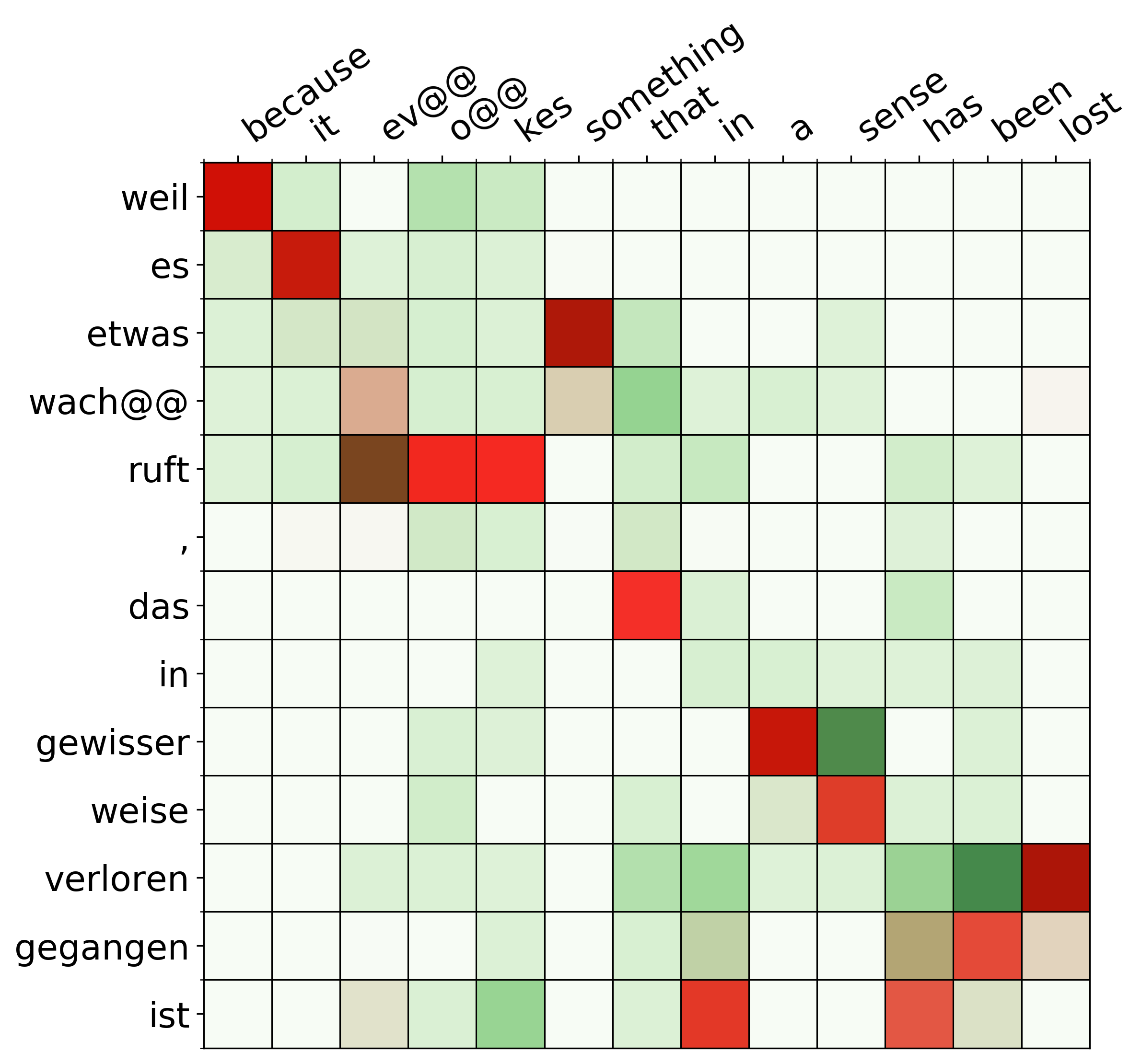}
  \label{fig:a2:f}
  \caption{}
  \end{subfigure}
  
  \caption{\label{fig:attn2} (Left Column) Further examples highlighting the difference between the prior alignment (red) and the variational posterior (blue) when translating from DE-EN (left-to-right). The variational posterior is able to better handle reordering; in (a) the variational posterior successfully aligns `turning' to `verwandelt', in (c) we see a similar pattern with the alignment of the clause `that's my brand' to `das ist meine marke'.
In (e) the prior and posterior both are confused by the `-ial' in `territor-ial', however the posterior still remains more accurate overall and correctly aligns the rest of `revierverhalten' to `territorial behaviour'.
(Right Column) Additional comparisons between soft attention (green) and the prior alignments of variational attention (red). Alignments from both models are similar, but variational attention is lower entropy. Both soft and variational attention rely on aligning the inserted English word `orientation' to the comma in (b) since a direct translation does not appear in the German source.}
\end{figure}

\end{document}